\newtheorem{thm}{\bf Theorem}[section]
\newtheorem{lemma}{\bf Lemma}[section]
\begin{document}
%
% paper title
% Titles are generally capitalized except for words such as a, an, and, as,
% at, but, by, for, in, nor, of, on, or, the, to and up, which are usually
% not capitalized unless they are the first or last word of the title.
% Linebreaks \\ can be used within to get better formatting as desired.
% Do not put math or special symbols in the title.
\title{Bilateral Asymmetry Guided Counterfactual Generating Network for Mammogram Classification}
%
%
% author names and IEEE memberships
% note positions of commas and nonbreaking spaces ( ~ ) LaTeX will not break
% a structure at a ~ so this keeps an author's name from being broken across
% two lines.
% use \thanks{} to gain access to the first footnote area
% a separate \thanks must be used for each paragraph as LaTeX2e's \thanks
% was not built to handle multiple paragraphs
%
\begin{comment}
\author{Wang*,~\IEEEmembership{Member,~IEEE,}
        John~Doe*,~\IEEEmembership{Fellow,~OSA,}
        and~Jane~Doe,~\IEEEmembership{Life~Fellow,~IEEE}% <-this % stops a space
\thanks{M. Shell was with the Department
of Electrical and Computer Engineering, Georgia Institute of Technology, Atlanta,
GA, 30332 USA e-mail: (see http://www.michaelshell.org/contact.html).}% <-this % stops a space
\thanks{J. Doe and J. Doe are with Anonymous University.}% <-this % stops a space
\thanks{Manuscript received April 19, 2005; revised August 26, 2015.}}
\end{comment}

\author{Chu-ran Wang*,
        Jing Li*,
        Fandong Zhang,
        Xinwei Sun\Envelope{},
        Hao Dong,
        Yizhou Yu,
        and Yizhou Wang\Envelope{} % <-this % stops a space
\thanks{* indicates equal contribution}
\thanks{\Envelope{} indicates corresponding author}
\thanks{Chu-ran Wang is with Center for Data Science, Peking University, Beijing, 100871, China, the work was done when she was an intern in Deepwise AI lab (e-mail: churanwang@pku.edu.cn).}
\thanks{Jing Li is with Dept. of Computer Science, Peking University, Beijing, 100871, China (e-mail: lijingg@pku.edu.cn).}
\thanks{Fandong Zhang is with Center for Data Science, Peking University, Beijing, 100871, China (e-mail: fd.zhang@pku.edu.cn).} 
\thanks{Xinwei Sun is with Microsoft Research Asia, Beijing, 100080, China (e-mail: xinsun@microsoft.com). }
\thanks{Hao Dong is with Center on Frontiers of Computing Studies, Dept. of Computer Science, Peking University, Beijing, 100871, China (e-mail: hao.dong@pku.edu.cn).} 
\thanks{Yizhou Yu is with Deepwise AI Lab, Beijing, 100080, China (e-mail: yizhouy@acm.org).}
\thanks{Yizhou Wang is with Dept. of Computer Science, Peking University, Beijing, 100871, China (e-mail: yizhou.wang@pku.edu.cn).}
}

% note the % following the last \IEEEmembership and also \thanks - 
% these prevent an unwanted space from occurring between the last author name
% and the end of the author line. i.e., if you had this:
% 
% \author{....lastname \thanks{...} \thanks{...} }
%                     ^------------^------------^----Do not want these spaces!
%
% a space would be appended to the last name and could cause every name on that
% line to be shifted left slightly. This is one of those "LaTeX things". For
% instance, "\textbf{A} \textbf{B}" will typeset as "A B" not "AB". To get
% "AB" then you have to do: "\textbf{A}\textbf{B}"
% \thanks is no different in this regard, so shield the last } of each \thanks
% that ends a line with a % and do not let a space in before the next \thanks.
% Spaces after \IEEEmembership other than the last one are OK (and needed) as
% you are supposed to have spaces between the names. For what it is worth,
% this is a minor point as most people would not even notice if the said evil
% space somehow managed to creep in.

% The paper headers
\markboth{C. Wang \MakeLowercase{\textit{et al.}}: Bilateral Asymmetry Guided Counterfactual GAN for Mammogram Classification}%
{}
% The only time the second header will appear is for the odd numbered pages
% after the title page when using the twoside option.
% 
% *** Note that you probably will NOT want to include the author's ***
% *** name in the headers of peer review papers.                   ***
% You can use \ifCLASSOPTIONpeerreview for conditional compilation here if
% you desire.

% If you want to put a publisher's ID mark on the page you can do it like
% this:
%\IEEEpubid{0000--0000/00\$00.00~\copyright~2015 IEEE}
% Remember, if you use this you must call \IEEEpubidadjcol in the second
% column for its text to clear the IEEEpubid mark.

% use for special paper notices
%\IEEEspecialpapernotice{(Invited Paper)}

% make the title area
\maketitle

% As a general rule, do not put math, special symbols or citations
% in the abstract or keywords.
\begin{abstract}
% must between 150-250 words.
Mammogram benign or malignant classification with only image-level labels is challenging due to the absence of lesion annotations. Motivated by the symmetric prior that the lesions on one side of breasts rarely appear in the corresponding areas on the other side, given a diseased image, we can explore a counterfactual problem that how would the features have behaved if there were no lesions in the image, so as to identify the lesion areas. We derive a new theoretical result for counterfactual generation based on the symmetric prior. By building a causal model that entails such a prior for bilateral images, we obtain two optimization goals for counterfactual generation, which can be accomplished via our newly proposed counterfactual generative network. Our proposed model is mainly composed of Generator Adversarial Network and a \emph{prediction feedback mechanism}, they are optimized jointly and prompt each other. Specifically, the former can further improve the classiﬁcation performance by generating counterfactual features to calculate lesion areas. On the other hand, the latter helps counterfactual generation by the supervision of classification loss. The utility of our method and the effectiveness of each module in our model can be verified by state-of-the-art performance on INBreast and an in-house dataset and ablation studies. 
\end{abstract}

% Note that keywords are not normally used for peerreview papers.
\begin{IEEEkeywords}
Domain Knowledge, Bilateral Asymmetry, Counterfactual, Mammogram Classification
\end{IEEEkeywords}

% For peer review papers, you can put extra information on the cover
% page as needed:
% \ifCLASSOPTIONpeerreview
% \begin{center} \bfseries EDICS Category: 3-BBND \end{center}
% \fi
%
% For peerreview papers, this IEEEtran command inserts a page break and
% creates the second title. It will be ignored for other modes.
\IEEEpeerreviewmaketitle

\section{Introduction}
Breast cancer is the leading cause of cancer death among women~\cite{siegel2019cancer}. The mammography-based \textbf{B}enign/\textbf{M}alignant \textbf{C}lassification (BMC) is considered to be an effective way for early breast cancer diagnosis.
Note that only the images with lesions need benign/malignant classiﬁcation. It is meaningless to tell the malignancy of healthy images since there are no lesions in them. Whether there are lesions in an image can be parsed from clinical reports. Since the existence of lesions is a necessary condition to be diagnosed as malignant, we are interested in benign/malignant classification for samples with lesions. The annotations of lesion areas require extra efforts such as bounding boxes of lesion areas \cite{dhungel2016automated,lotter2017multi,wu2018conditional,ribli2018detecting,tai2013automatic} and binary mask for segmentation \cite{chen2017dual}, which require expert domain knowledge and are costly and difficult to obtain. 
Therefore, addressing BMC with only image-level labels is valuable to clinical application. The key for BMC with the only image-level labels as supervision is to explore abnormal features for classification from a full mammogram image. This kind of abnormality can be expressed as masses, calcification clusters, structure distortions and their associated signs like skin retraction, skin thickening and so on. However, the high-intensity breast tissues in 2D image (as projection of the 3D organ) may partially obscure the lesions, making the problem more challenging.

To solve this problem, existing works mainly utilize specific rules or attention modules for feature selection, such as the selected local features with the maximum response or largest prediction score~\cite{zhu2017deep}, and select the most discriminative region via the proposed attention branch supervised by a classification signal~\cite{fukui2019attention, zhou2016learning}. The common problem for these methods lie in failing to take advantage of mammogram domain knowledge, which can be very valuable for lesion localization.

\begin{figure*}
%\vspace{-1cm}
\begin{center}
\includegraphics[height=0.53\linewidth]{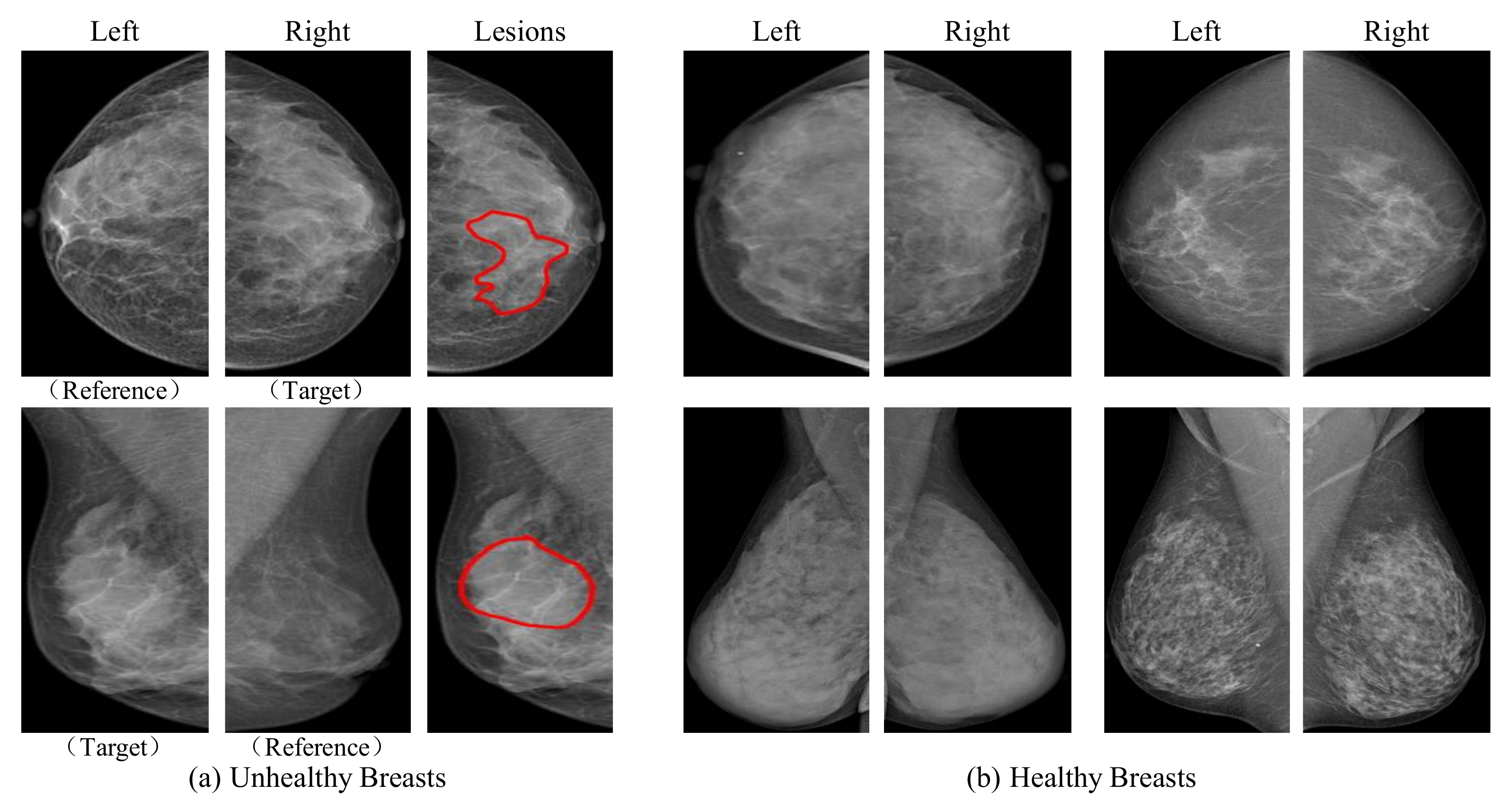}
\end{center}
%\fbox{\rule{0pt}{2in} \rule{.9\linewidth}{0pt}}
\centering
\caption{(a) Two cases to show how the unhealthy breasts look asymmetrical.  (b) Illustrations of that healthy breasts are roughly bilaterally symmetrical, with patterns and appearance (e.g., structure, distribution, density, and morphology) of breast tissues can be very diverse among them.}
\label{Asymmetryshow}
\vspace{-0.1cm}
\end{figure*}

One important mammogram domain knowledge is ``Anotomical Symmetry", which has been authenticated by BI-RADS standard of American College of Radiology \cite{sicklesacr}. It refers to that the lesion area in the target image (denoting the image from target side to be classified) of breast rarely appears in the corresponding area in the reference image (denoting the image of the opposite side). There is no lesion in the corresponding area on the other side, as shown in Fig.~\ref{Asymmetryshow}. Due to such a prior, the radiologists commonly compare bilateral breasts to find the asymmetric regions for further diagnosis.

Such a prior naturally motivates the counterfactual generation question: \emph{what would the features of the target image have been looked like had lesions removed, given observed target image with lesions and the reference image that is lesion-free in the corresponding area?} %We focus on the latent feature over image since the latent feature, as source components to generate image, can provide more interpretable information than image for classification. 
After such counterfactual features being generated, the residue between the original target features and the counterfactual one incorporates the information of lesion hence can provide an informative and interpretable guidance for BMC. The answer to the above question is via constructing a structural causal model \cite{pearl2009causality} in which the counterfactual learning is well defined.
Specifically, a structural causal model (SCM) is proposed that introduces latent bilateral variables for generating bilateral images. To depict the bilateral symmetry, we further introduce a hidden confounder (including DNA, environment, etc.) that generates such bilateral features via the same causal mechanism, naturally leading to an inspiring conclusion: the target features of counterfactual generation share the same distribution (i) with the reference features in lesion areas and (ii) with the target features in lesion-free areas, namely \emph{counterfactual constraints}. Based on such a theoretical finding, we propose a novel \textbf{C}ounterfactual \textbf{G}eneration \textbf{N}etwork (CGN). Note that pixel-to-pixel registration between bilateral images is challenging due to unpleasant spatial distortion during image capturing and imperfect anatomical symmetry, we apply counterfactual generation in feature level motivated by~\cite{liu2019unilateral}. Moreover, it achieves faster training speed without losing prediction power. This is also the reason why many domain adaptation methods work on feature space.
% Such a theoretical result naturally leads to a novel \textbf{C}ounterfactual \textbf{G}eneration \textbf{N}etwork (CGN), which 
Our CGN iteratively optimizes counterfactual generation under counterfactual constraints and lesion-area estimation via an attention-based prediction feedback mechanism. Both the lesion-area estimation and counterfactual generation are optimized jointly and prompt each other, supervised by classification loss. Finally, the residual features that incorporate the accurate lesion information, and the original target features which encodes the contextual information, are concatenated for the final classification.

In contrast to existing GAN-based works \cite{zhu2017unpaired,siddiquee2019learning,schlegl2017unsupervised} for counterfactual generation, our method is endowed with a theoretical guarantee regarding the counterfactual distribution \cite{chernozhukov2013inference} by exploiting the symmetric prior. Specifically, AnoGAN \cite{schlegl2017unsupervised} learns the latent space of healthy data and assumes that the lesions can not be reconstructed within such latent space. Therefore the areas with large reconstruction errors are more likely to be lesions. Its performance highly relies on how well the healthy data modeled. However, in our mammogram application, the glandular structure and characterization of healthy images can be very diverse. Sometimes the healthy pattern can even be similar to lesions, as shown in Fig~\ref{Asymmetryshow}. Thus it is challenging to model healthy patterns well and distinguish the lesions at the same time using only healthy data.
%the AnoGAN \cite{schlegl2017unsupervised} which learns to reconstruct healthy data may suffer from unstable generalization due to large diversity of healthy patterns; 
While another cycle consistency loss based method targets on lesion removal \cite{zhu2017unpaired, siddiquee2019learning}. Although these methods can utilize the lesion information by learning a back translation (i.e., from the counterfactual to the original), they also suffer from the healthy modeling problem in the forward translation (i.e., from the original to the counterfactual). What is more, these methods all assume that the translated data can be translated back to the original data~\cite{hu2019mask,nizan2019breaking}. In our application, it means the back translation network should be able to model the location and appearance of the removed lesion. However, mammogram lesions can appear anywhere, i.e., the location of the lesions is unpredictable. Therefore, it is an ill-posed problem to translate the counterfactual data back to the corresponding original data perfectly.
%while another cycle consistency loss based method that targets on lesion removal \cite{zhu2017unpaired, siddiquee2019learning} suffers from the ill-posed problem for back translation in each step since the lesions have been removed. The superiority of our proposed method can be demonstrated by (i) state-of-the-art area under the curve (AUC) results on both INBreast \cite{Moreira2012INbreast} and an in-house dataset; (ii) ablation studies to verify effectiveness of each component of our method.

In this paper, we introduce symmetry prior to counterfactual learning to propose a bilateral asymmetry guided counterfactual generating network (CGN), improving the performance of mammogram classification. Instead of learning from healthy images, our CGN applies counterfactual generation conditioning on the bilateral information. Based on the symmetry prior, we formulate the generated counterfactual features and estimated lesion areas together by counterfactual constraints: being similar distribution with the reference features in lesion areas and maintaining most of the information of target features in lesion-free areas. Therefore, we first apply a deep generator with AdaIN~\cite{huang2017arbitrary} mechanism to provide the feature generation ability. Then we design a prediction feedback mechanism to help estimate the lesion areas. Meanwhile, an adversarial reference loss, a feedback triplet loss, and an auxiliary negative embedding loss are proposed to encourage the generated features to satisfy the above counterfactual constraints. Both the lesion-area estimation and counterfactual generation are optimized jointly and prompt each other. Further, we get the residual features by computing the difference between the generated counterfactual features and target features. Finally, we aggregate the residual features together with the target features for the final classification.

We evaluate the proposed method on a public dataset INBreast~\cite{Moreira2012INbreast} and an in-house dataset. Our CGN achieves an area under the curve (AUC) of 91.1\% on INBreast and 78.1\% on the in-house dataset, which largely outperforms the representative methods. To summarize, our contributions are mainly three-fold:
\begin{enumerate}
    \item First, for benign or malignant classification with only image-level labels, we propose a novel counterfactual-based method to learn the healthy features of the target image, which can help localize the lesions to prompt further classification;
    \item Second, we draw the bilateral symmetry prior to the molybdenum target images into the counterfactual generation for learning counterfactual features reasonably and effectively;
    \item Third, we achieve state-of-the-art performance for mammogram classification on both the public and in-house datasets.
\end{enumerate}

%To summarize, our contributions are mainly three-fold: \textbf{(a)} For benign or malignant classification with only image-level labels, we propose a novel counterfactual-based method to learn the healthy features of the target image, which can help localize the lesions to prompt further classification; \textbf{(b)} We draw the bilateral symmetric prior of the molybdenum target images into the counterfactual generation for learning counterfactual features reasonably and effectively; \textbf{(c)} We achieve state-of-the-art performance for benign or malignant classification on both the public and in-house datasets.

\section{Related Work}

\subsection{BMC with only image-level labels} Previous approaches that can be used to address BMC with only image-level labels without any extra annotations are roughly categorized into two classes: (i) the attention-based methods, \emph{e.g.,} Zhu \textit{et al.}~\cite{zhu2017deep}, Zhou \textit{et al.}~\cite{zhou2016learning} and Fukui \textit{et al.}~\cite{fukui2019attention}; (ii) the simple multi-view fusion methods, \emph{e.g.,} Wu \textit{et al.}~\cite{wu2019deep}. For the class (i), they extend a response-based visual explanation model with an attention module or specific rules. However, they all ignore medical domain knowledge which is valuable for BMC and are fragile when facing dense breasts without learning from bilateral information. For the class (ii), since the bilateral breasts are not pixel-to-pixel symmetry, simple multi-view fusions can be very sensitive to bilateral misalignment. Motivated by above, we take advantage of domain knowledge and design CGN to improve BMC.

\subsection{Counterfactual Generation} Existing GAN-based models for counterfactual generation can be roughly categorized into two classes: (i) healthy modeling methods, \emph{e.g.,}AnoGAN~\cite{schlegl2017unsupervised} and (ii) cycle consistency based methods, \emph{e.g.,}~CycleGAN~\cite{zhu2017unpaired}, Fixed-point GAN~\cite{siddiquee2019learning}. For class (i) that learns to model the pattern of healthy data, 
% which is challenging and suffers from unstable result due to large diversity of glandular structure and characterization of healthy images which are hence difficult to model. 
they suffer from unstable result due to large diversity of glandular structure and characterization of healthy images which are hence difficult to model.
Another line of work, i.e., class (ii), uses cycle consistency loss to incorporate bi-directed translation: forward translation (from the original to the counterfactual) and back translation (from the counterfactual to the original). These methods suffer from two problems: a) the healthy modeling problem for forward translation, similar to class (i); b) the ill-posed problem for back translation since the location and appearance of the removed lesion is diverse and unpredictable. In contrast to existing works, our method learns healthy pattern by exploiting symmetric prior, so as to avoid the problems mentioned above and hence be able to achieve more robust counterfactual generation result.

\section{Methodology}
\label{method}

\begin{figure*}[t]

\begin{center}
\includegraphics[height=0.45\linewidth]{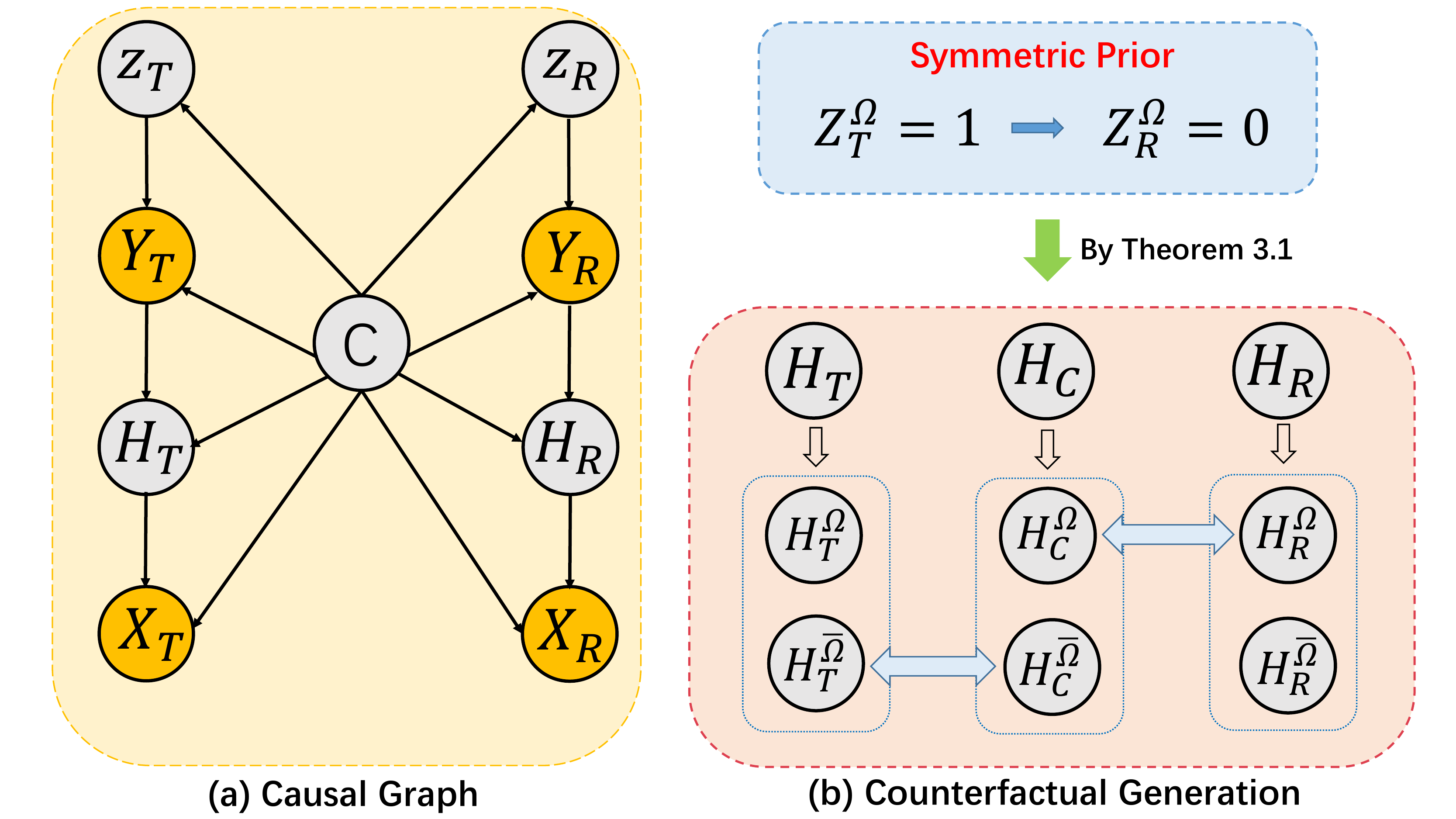}
\end{center}
\caption{(a): Our causal graph with observed variables marked by yellow and unobserved variables marked by gray. For notations, $C$ denotes the DNA, growth environment that can explain the common properties shared between $X_T$ and $X_R$; $Z_R,Z_T$ denote lesion states ( $Z_{u=T,R}^{\Omega} = 1$ if there are lesions in $\Omega$; and $=0$ if not ); $H_R, H_T$ respectively denote the hidden features of the image. (a) is mathematically expressed in our Eq~\eqref{eq:scm}. (b): Our counterfactual learning framework, motivated by symmetric prior (as shown in the top blue box). Our theoretical result (theorem~\ref{thm1}) is illustrated in the bottom orange box, in which the $H_C$ denotes the counterfactual result of the target side with the removal of lesion areas, i.e.,  the counterfactual result of $H_T$ under counterfactual event $Z_T = 0$. The blue arrows denote "distributionally equivalence". As shown, the distribution of $H_C^{\Omega}$ is the same with $H_R^{\Omega}$, described by Eq.~\eqref{eq1}; the distribution of $H_{C}^{\overline{\Omega}}$ is the same with $H_{T}^{\overline{\Omega}}$, described by Eq.~\eqref{eq2}.}
\label{fig:causal_graph}
\end{figure*}

\textbf{Problem Setup and Notations} The goal of mammogram benign or malignant classification is to learn classifier $f: \mathcal{X} \to \mathcal{Y}_T$ that predicts the disease label of target side $X_T$, where $\mathcal{X} := (\mathcal{X}_T, \mathcal{X}_R)$ ($\mathcal{X}_T, \mathcal{X}_R \subset \mathbb{R}^d$) denotes the input space of bilateral breast images with $T$ denoting the target side of bilateral breast image and $R$ correspondingly denoting the other side, a.k.a, reference side, and $\mathcal{Y}_T:=\{0,1\}$ denotes the disease label of the target side (1 denotes malignant and 0 denotes benign). To achieve this goal, we are given training data $\{(x^i_T,x^i_R,y^i_T)\}_{i \in [N]}$ ($[N] := \{1,...,N\}$ for any integer $N > 0$).
% in which $z_{u=T,R} \in \mathbb{R}^d$ denotes vector in which each element is a binary variable that denotes whether such a pixel contains lesions or not. 
During test stage, our goal is to predict $y_t$ for a new instance $x = (x_T,x_R) \in \mathcal{X}$. 

\subsection{Counterfactual Learning}
\textbf{Symmetric Prior~\cite{sicklesacr}} \emph{For a paired image data, if the target image contains lesions, the corresponding symmetrical area in the reference image has almost certainly no lesions.}

This symmetric prior provides a guidance for localizing lesion areas, as a residue of the feature of target image subtracting the one with the removal of corresponding lesions. The generation of the latter image, which can leverage the information of the reference features due to symmetric prior, is a counterfactual problem, i.e., \emph{what would the features of target image have been looked like had lesions removed, given observed target image with lesions and the reference image that is lesion-free in the corresponding area?} Such a counterfactual problem has been well-defined and explored in the framework of (Structural) Causal Model (SCM) \cite{pearl2009causality} that describes the generating process of observational variables, with assumptions entailed in the corresponding causal graph.

To describe bilateral images, we propose a SCM that introduces a hidden common factor (denoted as $C$ which can refer to DNA, growth environment, etc.) that generates bilateral variables, which depicts our symmetric prior, as shown in Fig.~\ref{fig:causal_graph} (a). Besides, our SCM incorporates bilateral latent features, denoted as $H_{U=T,R}$ ($T$ denotes target side and $R$ denotes reference side), as abstraction/concepts of bilateral images. Such bilateral features, which are affected by $C$ and disease status ($Y_{U=T,R}$) that is determined by lesion status $Z_{U=T,R}$. The distribution of these variables are assigned by the following structural equations: $ C = f_C(\epsilon) \to $
\begin{small}
\begin{align}
    \begin{cases} Z_T=f_{Z_T}(C) \\
    Z_R=f_{Z_R}(C) \end{cases} \to \begin{cases} Y_T=f_{Y_T}(C, Z_T) \\ Y_R=f_{Y_R}(C, Z_R) \end{cases} \to
    \notag
\end{align}
\end{small}
\begin{small}
\begin{align}
\label{eq:scm}
    \begin{cases} H_T=f_{H_T}(C, Y_T) \\  H_R=f_{H_R}(C, Y_R)\end{cases} \to \begin{cases} X_T=f_{X_T}(C, H_T) \\ X_R=f_{X_R}(C, H_R). \end{cases}
\end{align}
\end{small}
Equipped with such a SCM, we can mathematically formulate the symmetric prior as $Z_T^{\Omega} = 1 \to Z_R^{\Omega} = 0$, with $\Omega$ denoting the lesion areas of the target image $X_T$; and counterfactual generation problem as 
$H_{{T}_{(Z_{T}=0)}}^{\Omega}(c)$ that can be read as the value of $H_{T}$ on $\Omega$ in situation $C=c$ had $Z_{T}^{\Omega} =0$ \cite{pearl2009causality}. Since the situation $C = c$ is induced by the factual event $\{H_{T}^{\Omega} = h_t, Z_{T}^{\Omega} = 1\}$, our counterfactual distribution 
can be denoted as $P(H_{{T}_{(Z_{T}^{\Omega}=0)}}^{\Omega}=h|H_{T}^{\Omega}=h_t, Z_{T}^{\Omega} = 1)$. Under our SCM and the symmetric prior, we have following results for counterfactual generation: 

% \begin{figure*}[t]

% \begin{center}
% \includegraphics[height=0.55\linewidth]{figures/formulation_new_v5.pdf}
% \end{center}
% \caption{(a): Our causal graph with observed variables marked by yellow and unobserved variables marked by gray. For notations, $C$ denotes the DNA, growth environment that can explain the common properties shared between $X_T$ and $X_R$; $Z_R,Z_T$ denote lesion states ( $Z_{u=T,R}^{\Omega} = 1$ if there are lesions in $\Omega$; and $=0$ if not ); $H_R, H_T$ respectively denote the hidden features of the image. (a) is mathematically expressed in our Eq~\eqref{eq:scm}. (b): Our counterfactual learning framework, motivated by symmetric prior (as shown in the top blue box). Our theoretical result (theorem~\ref{thm1}) is illustrated in the bottom orange box, in which the $H_C$ denotes the counterfactual result of the target side with the removal of lesion areas, i.e.,  the counterfactual result of $H_T$ under counterfactual event $Z_T = 0$. The blue arrows denote "distributionally equivalence". As shown, the distribution of $H_C^{\Omega}$ is the same with $H_R^{\Omega}$, described by Eq.~\eqref{eq1}; the distribution of $H_{C}^{\overline{\Omega}}$ is the same with $H_{T}^{\overline{\Omega}}$, described by Eq.~\eqref{eq2}.}
% \label{fig:causal_graph}
% \end{figure*}

\begin{thm}\label{thm1}
Under the symmetric prior, the structural equation model defined in Eq.~\eqref{eq:scm} for Fig.~\ref{fig:causal_graph} (a) has the following results for counterfactual distribution of target features: 
\begin{align}
     P(H_{{T}_{(Z_T^{\Omega}=0)}}^{\Omega}=h|H_T^{\Omega}=h_t, Z_T^{\Omega}=1)\notag\\
    = P(H_R^{\Omega}=h_r|H_T^{\Omega}=h_t, Z_T^{\Omega}=1) \label{eq1} \\
     P(H_{{T}_{(Z_T^{\overline{\Omega}}=0)}}^{\overline{\Omega}}=h|H_T^{\overline{\Omega}}=h_t, Z_T^{\overline{\Omega}}=0) \notag\\
      = P(H_T^{\overline{\Omega}}=h_t|H_T^{\overline{\Omega}}=h_t, Z_T^{\overline{\Omega}}=0) \label{eq2},
\end{align}

\end{thm}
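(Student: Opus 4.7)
The plan is to apply Pearl's three-step counterfactual procedure---abduction, action, prediction---to the SCM in Eq.~\eqref{eq:scm}, and then transfer the resulting intervened distribution on the target side onto the reference side using the symmetric prior. Of the two statements, Eq.~\eqref{eq2} is immediate from Pearl's consistency axiom: conditioning on the factual event $Z_T^{\overline{\Omega}}=0$ while simultaneously intervening to the same value makes the counterfactual world coincide with the factual one, so the counterfactual distribution on $\overline{\Omega}$ collapses to a point mass at the observed $h_t$, which matches the trivial right-hand side. All the substantive work therefore lies in Eq.~\eqref{eq1}.

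For Eq.~\eqref{eq1}, the first step (abduction) updates the distribution of the hidden common factor $C$ given the factual evidence $(H_T^{\Omega}=h_t, Z_T^{\Omega}=1)$, yielding a posterior $P(C\mid H_T^{\Omega}=h_t, Z_T^{\Omega}=1)$. The second step (action) replaces the structural equation for $Z_T$ on $\Omega$ by the constant $0$ and propagates through $Y_T=f_{Y_T}(C,0)$ and $H_T=f_{H_T}(C,Y_T)$. The third step (prediction) marginalizes $C$ against its updated posterior, so that the left-hand side of Eq.~\eqref{eq1} rewrites as
\[
\int P\bigl(H_T^{\Omega}=h\mid C=c,\, Z_T^{\Omega}=0\bigr)\, P\bigl(C=c\mid H_T^{\Omega}=h_t,\, Z_T^{\Omega}=1\bigr)\, dc.
\]
I would then invoke the symmetric prior: $Z_T^{\Omega}=1$ forces $Z_R^{\Omega}=0$ almost surely, so conditioning on $Z_T^{\Omega}=1$ is equivalent, on $\Omega$, to conditioning on $Z_R^{\Omega}=0$. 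Combined with the ``same causal mechanism shared between sides'' that the paper invokes just above the theorem---formally, $f_{H_T}$ and $f_{H_R}$ induce the same conditional law given matching $(C,Z)$---the integrand rewrites as $P(H_R^{\Omega}=h\mid C=c,\, Z_R^{\Omega}=0)$, and re-integrating against the same posterior on $C$ reassembles the right-hand side of Eq.~\eqref{eq1}.

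The main obstacle is making this shared-mechanism hypothesis precise. Eq.~\eqref{eq:scm} deliberately writes $f_{H_T}$ and $f_{H_R}$ as two distinct symbols, and the graph in Fig.~\ref{fig:causal_graph}(a) by itself does not force them to coincide in distribution; the symmetric prior only pins down the lesion-status side of the picture. Without an additional structural-symmetry assumption, the key identity $P(H_T\mid C, Z_T=0)=P(H_R\mid C, Z_R=0)$ can fail and the counterfactual target distribution is not identifiable from the reference side. I would therefore state this structural symmetry explicitly as a second premise of the theorem (together with the symmetric prior) before running the abduction--action--prediction template; once it is in place, the rest of the derivation is a routine calculation on the causal graph of Fig.~\ref{fig:causal_graph}(a).
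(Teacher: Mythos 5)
Your proposal is correct and follows essentially the same route as the paper: the abduction--action--prediction expansion over the posterior of $C$, the Markov/consistency step for the counterfactual given $C$, and the symmetric-prior swap from the target to the reference side reproduce exactly the chain of equalities in the paper's proof of Eq.~\eqref{eq1}, and Eq.~\eqref{eq2} is likewise dispatched by consistency. The additional structural-symmetry premise you rightly insist on is precisely what the paper supplies as Lemma~\ref{lemma} (i.e., $f_{H_T}(C,\cdot)=f_{H_R}(C,\cdot)$ and the analogous equalities), so that point is handled there rather than being a gap in your argument.
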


The proof of Theorem~\ref{thm1} is shown in our appendix. This theorem implies that the generated counterfactual features should be equal (i) to reference features in lesion areas, (ii) to target features in lesion-free areas, which leads the following two goals for the counterfactual generation: 
\begin{equation}
  \underset{\theta}{min}\quad D(P_{\theta}(H_{T_{(Z_T^{\Omega}=0)}}^{\Omega}), P_{\theta}(H_R^{\Omega}))  
  \label{eq:constraints1}
\end{equation}
\begin{equation}
    \underset{\theta}{min}\quad D(P_{\theta}(H_{T_{(Z_T^{\overline{\Omega}}=0)}}^{\overline{\Omega}}), P_{\theta}(H_T^{\overline{\Omega}}))
    \label{eq:constraints2}
\end{equation}

where $D$ denotes generalized distance measure, \emph{e.g.,} KL divergence. 
With such counterfactual learning, it is expected that the lesion areas, as the subtraction of counterfactual generation of $H_T$ (with lesions removed) from original $H_T$, can be detected precisely and hence can lead to accurate classification performance. To achieve the above two goals, we propose a counterfactual generating network (CGN), which cooperatively localizes the lesion areas and achieve counterfactual generation simultaneously. We explain the CGN in details in the subsequent section.

\begin{figure*}[t]
\begin{center}
    \includegraphics[height=0.5\linewidth]{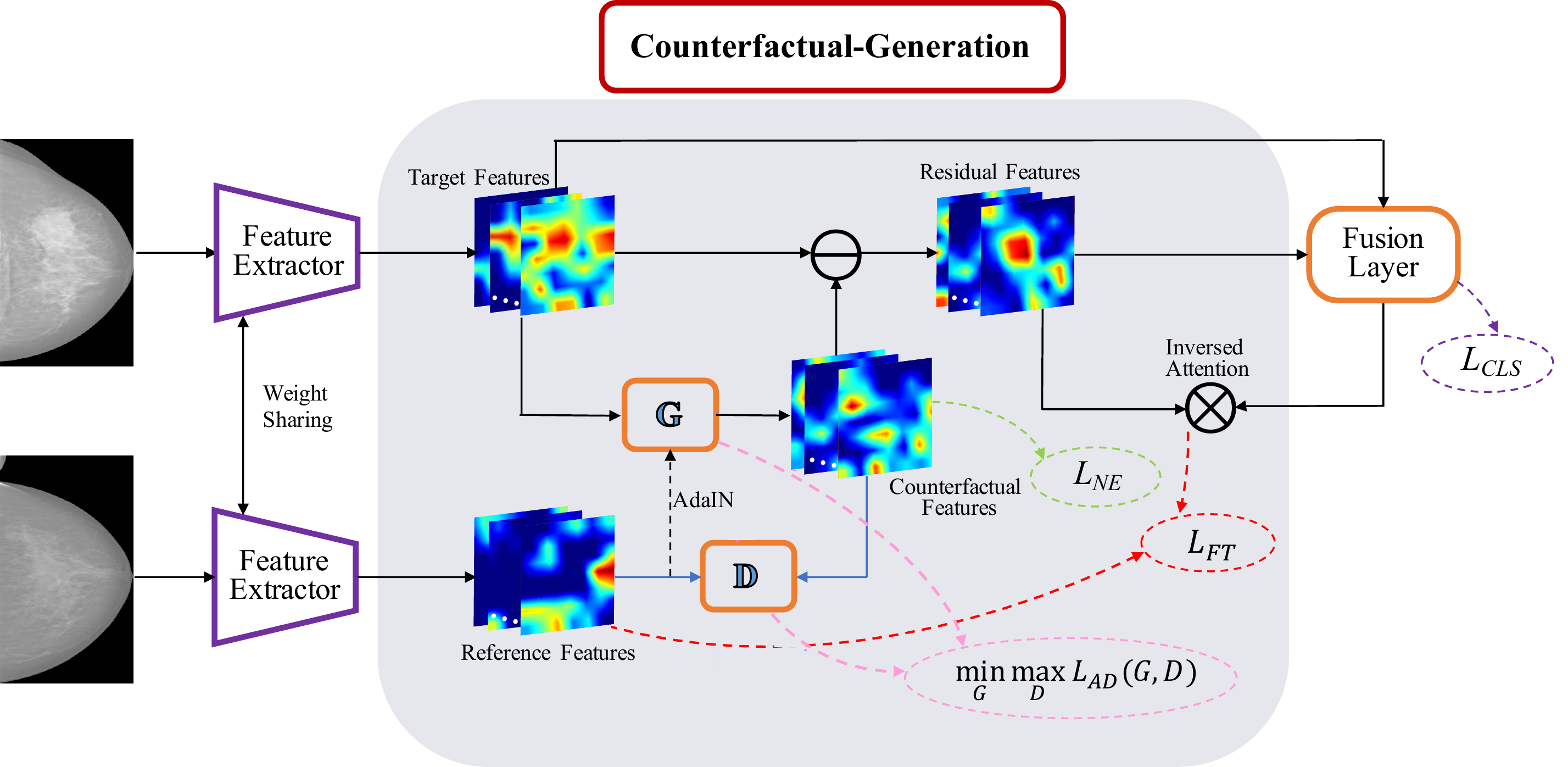}
\end{center}
    \centering
    \centering\caption{The schematic overview of CGN. First, two feature extractors with weight sharing extract the features for input paired target and reference images, respectively. Then the bilateral features are processed by AdaIN mechanism and fed into the generator $G$ to generate the counterfactual features. The counterfactual features are constrained by adversarial learning with a feedback triplet loss $\mathcal{L}_{FT}$, and a negative embedding loss $\mathcal{L}_{NE}$. Then, the residual features are obtained by computing the difference between the target features and counterfactual features. Finally, the residual features are fed into a \emph{Fusion} network with target features and outputs prediction of benign/malignant.
}
\label{fig:overview}
\end{figure*}

% \subsection{Method Overview}Bilateral Asymmetry Guided 
\subsection{Counterfactual Generating Network (CGN)}
\label{sec:algorithm}
As illustrated in Fig.~\ref{fig:overview}, our counterfactual generation network for mammogram classification contains the following steps: (i) generation of target and reference features $H_T$ and $H_R$ from images $X_T$ and $X_R$, via a feature extractor chosen from backbone network, \emph{e.g.} AlexNet~\cite{krizhevsky2012imagenet}, ResNet~\cite{hermans2017defense}, (ii) a \textbf{\emph{counterfactual generation module}} is designed to generate counterfactual features $H_C$ from both $H_T$ and $H_R$, (iii) \textbf{\emph{a classification module}} is designed to predict malignant/benign, with aggregated $H_C$ and $H_T$ as input. To accurately identify $\Omega$ for generating $H_C$ in step (ii), a \textbf{\emph{prediction feedback mechanism}} and a set of \textbf{\emph{counterfactual constrains}} motivated by Eq.~\eqref{eq:constraints1} and ~\eqref{eq:constraints2} are designed. In what follows, we will explain the above mechanisms in more details.

\textbf{Counterfactual Generation Module} The Adaptive Instance Normalization (AdaIN)~\cite{huang2017arbitrary}, which has been proved to be effective for style transfer tasks, is adopted as the generator $G$ (as shown in Fig.~\ref{fig:overview}) for counterfactual generation, with $H_T$ as content and $H_R$ as style in our case: 
\begin{equation}
    AdaIN(H_T, H_R)=\sigma(H_R)\left(\frac{H_T-\mu(H_T)}{\sigma(H_T)} \right)+\mu(H_R)
\end{equation}
with $\mu(\cdot)$ and  $\sigma(\cdot)$ denoting the mean and standard variance function. As suggested by~\cite{huang2017arbitrary}, an interpolated $H_T$ and AdaIN are fed into a generator network containing nine residual blocks to generate counterfactual features $H_C$: 
\begin{equation}
    H_C = G((1-\alpha) * H_T + \alpha * AdaIN(H_T, H_R))
\end{equation}
where $\alpha$ is a hyper-parameter of the interpolation weight.

\textbf{Classification Module} The residual features (entailing lesion information) obtained by $H_T-H_C$ and $H_T$ (with additional contextual information which is showed useful for the medical image inference~\cite{Amores2005RetrievalOI} besides lesion-related information we obtained) are fed into a classifier in a concatenated way. This classifier, which implements a convolutional block as FusionLayer to obtain the fused features, is trained via commonly used cross-entropy loss:
\begin{align}
    \label{cls_loss}
    \mathcal{L}_{CLS}(G) = - (Y_T^{gt}*\log Y_T(H_T-G(H_R,H_T),H_T) \notag\\+ (1-Y_T^{gt})*\log (1- Y_T(H_T-G(H_R,H_T),H_T))),
\end{align}

where $Y_T(H_T-G(H_R,H_T),H_T)$ (with $H_C=G(H_R,H_T)$) is the classification probability. 

\textbf{Prediction Feedback Mechanism}
% \red{To estimate the lesion areas $\Omega$, an intuitive idea is using the region with the largest target-counterfactual distance. However, here we find a more effective way named prediction feedback mechanism that is directly guided by the malignancy classifier. We first calculate the class activation map (CAM)~\cite{zhou2016learning} on the feature maps before the classification. We use CAM since it has been proved to be effective for weakly supervised localization. After that, we turn CAM into an attention map by normalization using a softmax function. The attention map denotes where the classifier pays attention to the classification. The locations on the attention map with high values indicate high lesion probabilities. Specifically, such an attention map is calculated by normalization/softmax following the class activation map (CAM)~\cite{zhou2016learning}, i.e., $\Omega_s=softmax(CAM)$. $\Omega_s$, as final estimation of $\Omega$, is the corresponding prediction probabilities of being lesions at each position. } 
This mechanism is to estimate the lesion areas $\Omega$ for better counterfactual generation. Specifically, we use the attention map, in which the locations with higher value implies higher lesion probabilities, as final estimation of $\Omega$. Such an attention map is calculated by normalization/softmax following the class activation map (CAM)~\cite{zhou2016learning}, i.e., $\Omega_s=softmax(CAM)$. $\Omega_s$ is the corresponding prediction probabilities of being lesions at each position.

\begin{table*}
\centering
\scriptsize
% \resizebox{\textwidth}{0.5}{
\begin{tabular}{c|cccc}
% \hline
% & \multicolumn{4}{c}{Area Under the Curve} \\
\hline
Method & AUC (a) & AUC (b) & AUC (c) & AUC (d)  \\
\hline
Pretrained CNN~\cite{dhungel2016automated} & 0.690 & $-$ & $-$ & $-$ \\
Pretrained CNN+Random Forest~\cite{dhungel2016automated} & 0.760 & $-$ & $-$ & $-$ \\
Vanilla AlexNet, Zhu \textit{et al.} \cite{zhu2017deep} & 0.790 & $-$ & $-$ & $-$ \\
Zhu \textit{et al.} \cite{zhu2017deep} & 0.890 & $-$ & $-$ & $-$   \\ \cline{1-5}
%\bigstrut
Vanilla* & 0.820 & 0.827 & 0.780 & 0.697 \\
AnoGAN \cite{schlegl2017unsupervised}* & 0.803 & 0.796 & 0.774 & 0.720 \\
Fixed-Point GAN \cite{siddiquee2019learning}* & 0.835 & 0.837 & 0.805 & 0.734 \\
CycleGAN \cite{zhu2017unpaired}* & 0.852 & 0.838 & 0.808 & 0.741\\
%CycleGAN (bilateral)\cite{zhu2017unpaired}* & 0.855 & 0.835 & 0.815 & 0.742\\
Wu \textit{et al.} \cite{wu2019deep} & 0.863  & 0.860 & 0.810 & 0.723 \\
Zhu \textit{et al.} \cite{zhu2017deep}* & 0.860 & 0.862 & 0.830 & 0.720 \\
Vanilla*+GAP \cite{zhou2016learning}* & 0.857 & 0.827 & 0.780 & 0.718 \\
Vanilla*+ABN \cite{fukui2019attention}* & 0.858  & 0.846 & 0.814 & 0.723 \\\cline{1-5}
Proposed Method & \textbf{0.910}  &  \textbf{0.911} & \textbf{0.885} & \textbf{0.781}\\
\hline
\end{tabular}
\caption{AUC evaluation of comparative experiments on
% \xinwei{INBreast+AlexNet, INBreast+ResNet50, In-house+ResNet50, In-house+AlexNet}(a)(b)(c)(d). 
(a) INBreast + Alexnet (mass); (b) INBreast + Resnet50 (mass); (c) INBreast + Resnet50 (mixed lesions); (d) In-house + Alexnet (mixed lesions); Note that the '*' means our re-implementation. The '-' means there are no official report results.
% in these papers.
% INBreast dataset for mass malignancy classification with Alexnet; (b) INBreast dataset for mass malignancy classification with Resnet50; (c) INBreast dataset for mixed malignancy classification with Resnet50; (d) in-house dataset for mixed-lesion malignancy classification with Alexnet.
}
\label{tab:comparison_all}
\end{table*}

\textbf{Counterfactual Constraints} Since the direct optimization of Eq.~\eqref{eq:constraints1} and~\eqref{eq:constraints2} can be intractable/unstable for general distance measure $D$ such as KL-divergence, we adopt the adversarial learning strategy~\cite{goodfellow2014generative}. For optimization of Eq.~\eqref{eq:constraints1}, GAN generates similar features from the whole reference image and can constrain our desired features be the same as the references in lesion areas.
% \sout{Therefore we design a new feedback triplet loss to control the (i) $H_C$-$H_T$ and (ii) $H_C$-$H_R$ distance on lesion and lesion-free areas, together with GAN loss.} 
Specifically, a Discriminator $D$ (learns to classify $H_C$ and $H_R$) and a Generator $G$ (fools the discriminator) are designed and trained in a competing way:
\begin{align}
\label{eq:lossgd}
& \min_G \max_D \mathcal{L}_{AD}(G,D) := \notag\\
& \log\left ( D\left ( H_R \right ) \right ) + \log\left (1- D\left ( G(H_T,H_R) \right ) \right ).
\end{align}
However, the generated features through GAN loss are undesired features in lesion-free areas. For optimization of Eq.~\eqref{eq:constraints2}, we use a prediction feedback mechanism to localize lesion areas. One intuitive way to use feedback mechanism is constraining generated features be the same as the target features in lesion-free areas directly or only constrains the generated features be the same as the reference features in lesion areas in discriminator. However, motivated by~\cite{schroff2015facenet} triplet loss can be better than such designs. They will suffer from slow convergence and falling into local minimum easily and we analysis and evaluate such variant methods in Sec~\ref{sec:ablation study}. Thus, we propose a feedback triplet loss to minimize the distance between the target features $H_T^{\overline{\Omega}}$ and counterfactual features $H_C^{\overline{\Omega}}$ in lesion-free areas, which is measured by target-counterfactual distance $d_{tc}$ by weighted mean square error:
\begin{equation}
    d_{tc}=\frac{\sum_{i}^{h}\sum_{j}^{w}(1-{\Omega}^{ij})\left \| H^{ij}_T-H^{ij}_C\right \|_{2}^{2}}{h\times{w}-1}
    \label{loss:dtc}
\end{equation}, where $h$ and $w$ denote the height and width of CAM respectively.
Motivated by minimization of distance between $H_C$ and $H_R$ enforced by Eq.~\eqref{eq:lossgd}, we choose a $d_{rc}$ between $H_R$ and $H_T$ as an adaptive reference to minimize $d_{tc}$. The $d_{rc}$ is measured by chamfer distance~\cite{achlioptas2017learning} to endure the misalignment, and is defined by
%Since $d_{tc}=0$ may lead to a collapsed generation $H_C \equiv H_T$, we alternatively choose a $d_{rc}$ between $H_R$ and $H_T$ as a reference to minimize $d_{tc}$, motivated by minimization of distance between $H_C$ and $H_R$ enforced by Eq.~\eqref{eq:lossgd}. The $d_{rc}$ is measured by chamfer distance~\cite{achlioptas2017learning} to endure the misalignment, and is defined by:
\begin{equation}
    d_{rc}=\frac{\sum_{i}^{h}\sum_{j}^{w}(\underset{u,v}{min}\left \| H^{ij}_{R}-H^{uv}_{C}\right \|_{2}^{2} + \underset{u,v}{min}\left \| {H^{ij}_{C}}-H^{uv}_{R}\right \|_{2}^{2})}{2\times{h}\times{w}}
\end{equation}
Therefore, the feedback triplet loss is defined as:
\begin{equation}\label{eq:ftl}
\mathcal{L}_{FT}(G) = \max\left \{ 0, d_{tc} + \beta - d_{rc} \right \}
\end{equation}
The triplet loss makes $H_C$ be closer to $H_T$ than $H_R$ in terms of the lesion-free areas. Further the GAN loss makes the distance between $H_C$ and $H_R$ be close in the lesion areas. Based on the cooperation of GAN loss and the triplet loss, the generated $H_C$ satisfies Eq.~\eqref{eq:constraints1} and~\eqref{eq:constraints2}.
% \red{The triplet loss makes $H_C$ be closer to $H_T$ than $H_R$ in terms of the lesion-free areas. Further, since the GAN loss makes $H_C$ be close to $H_R$ in the whole areas, thus $H_C$ is close to $H_R$ in lesion areas. Based on the cooperation of GAN loss and the triplet loss, the generated $H_C$ satisfies Eq.~\eqref{eq:constraints1} and~\eqref{eq:constraints2}.}
% \sout{We hence relax constraint on $\Omega$ in $d_{rc}$ which turned out not to lose prediction power. }
Besides, ${L}_{FT}(G)$ as a margin term can avoid learning identity mapping from $H_T$ to $H_C$ during minimizing ${L}_{FT}(G)$. Catering misalignment is not needed for $d_{tc}$ since $H_C$ is for the “target" and hence perfectly aligned with $H_T$ in pixel-wise. 

Besides, since the lesion regions of $H_T$ have been removed in $H_C$, the $H_C$ must also be non-malignant. Such a knowledge can be reflected via auxiliary negative embedding loss as a constraint:
\begin{equation}\label{eq:lne}
\mathcal{L}_{NE}(G) = -\log(1-p_m(H_C)),
\end{equation}
where $p_m(H_C)$ denotes the malignant probability of $H_C$.

\textbf{Joint Optimization} The final loss is combination of the losses defined in Eq.~\eqref{cls_loss},~\eqref{eq:lossgd},~\eqref{eq:ftl} and~\eqref{eq:lne}: 
%\begin{equation}\label{eq:loss}
\begin{align}\label{eq:loss}
\min_G \max_D \mathcal{L}(G,D) :=\sum _k \big \{\mathcal{L}_{AD}^{k}(G,D) + \mathcal{L}_{NE}^{k}(G) \notag\\+ \mathcal{L}_{FT}^{k}(G) + \mathcal{L}_{CLS}^{k}(G) \big \}.
\end{align}
%\end{equation}
, where $k$ denotes sample index, that is, we calculate corresponding losses for each sample and derive the final joint loss.
By optimizing the loss $\mathcal{L}(G,D)$, these modules can be optimized cooperatively and compatibly: the counterfactual generation helps discover the lesions for classification; on the other hand, the classiﬁcation module helps counterfactual generation in a supervised way. The effect of these modules can be validated by our ablation study, which are explained detailedly in the next section.

\section{Experiments}

\subsection{Implementation Details}

Mammogram images are commonly stored using a 14-bit DICOM format. A simple linear mapping is used to convert them into 8-bit gray images. Then, the Otsus method~\cite{otsu1979a} is used for breast region segmentation and background removal. The segmented images are resized into $224\times224$ and fed to networks. We implement all models with PyTorch. The models are initialized by ImageNet pre-trained weights for a fair comparison with the representative method~\cite{zhu2017deep}. For training, we use Adam optimization with a learning rate of $5e-5$ and train for 50 epochs. For all experiments, we select the best model on the validation set for testing. Both target and reference features are extracted from the last convolution layer.

\begin{table}
\centering
\begin{tabular}{c|c|c}
\hline
\textbf{Methodology} & \textbf{Top-1 error}(b) & \textbf{Top-1 error}(d)\\
\hline
ResNet50\cite{he2016deep} & 0.635 & 0.727 \\
AnoGAN \cite{schlegl2017unsupervised}* & 0.684 & 0.789 \\
Fixed-Point GAN \cite{siddiquee2019learning}* & 0.646 & 0.737 \\
CycleGAN \cite{zhu2017unpaired}* & 0.632 & 0.667 \\
Wu \textit{et al.}~\cite{wu2019deep}* & 0.627 & 0.650 \\
ABN~\cite{fukui2019attention} & 0.632 & 0.722 \\
Zhu \textit{et al.}~\cite{zhu2017deep}* & 0.627 & 0.625\\
\hline
Proposed Method & \textbf{0.421} & \textbf{0.455}\\
\hline
\end{tabular}
\caption{Top-1 localization error on (b) INBreast dataset for mass classification with Resnet50; (d) INBreast dataset for mixed-lesion classification with Resnet50. }
%  \caption{AUC evaluation on in-house dataset for mixed-lesion classification. }
%\end{floatrow}
\label{tab:localization}
\end{table}

\subsection{Datasets}

We evaluate our method on the public INBreast dataset~\cite{Moreira2012INbreast} due to its high quality compared to other public datasets \cite{zhu2017deep} and an in-house dataset. 
The INBreast dataset contains 115 cases and 410 mammograms. INBreast provides each image a BI-RADS result as image-wise ground truth and we use the same process as Zhu \textit{et al.} \cite{zhu2017deep}. (malignant if BI-RADS $>$ 3; benign otherwise). Our experimental setting in INBreast is all the same as Zhu \textit{et al.} \cite{zhu2017deep} who uses 100 mammogram images with masses and reports image-wise malignant classiﬁcation performance. We discard 9 of them for lack of contralateral images in the same task. The remaining 91 images all have opposite sides, i.e. 91 pairs for mass malignancy classification. We consider two settings: the mass-lesion image classification and mixed-lesion classification in which the lesion can be masses, calcification clusters and distortions. First, we follow~\cite{zhu2017deep} and select only the images containing masses for mass malignancy classification. In particular, we discard 9 images for the absence of the reference image. Second, to be generalized, we also evaluate mixed-lesion malignancy classification including masses, calcification clusters, or distortions. We use five-fold cross-validation for evaluation and area under the curve (AUC) for measurement.

The in-house dataset contains 2500 images, where 1303 images contain image-level malignant annotations. The dataset contains 589 only masses, 120 only suspicious calcifications, 34 only architectural distortions, 197 only asymmetries and 363 multiple lesions from 642 patients. All these 1303 images have opposite sides, i.e. 1303 pairs (Note that the target image A with a malignancy annotation is paired with B,  counting as one pair. Meanwhile, if B also has a malignancy annotation, conversely B can be the target and A can be the reference, counting as another one pair). We randomly divide the dataset into training, validation and testing sets by the proportion of $8:1:1$ in patient-wise.

\subsection{Experiment settings} 

To fairly compare our method with others in a more general way, we implement AlexNet as backbone on both INBreast (for mass malignancy classification) and in-house dataset (for mixed-lesion malignancy classification). And we implement Resnet50 as backbone on INBreast (for both mass malignancy classification and mixed-lesion malignancy classification). 

\subsection{Bilateral Distribution Verification} 

In this section, we verify the correctness of our symmetric prior assumption which is motivation of our proposed framework. Specifically, we choose 1,000 unhealthy couples of the bilateral images, each of which contains at least one lesion from the in-house dataset. Then for comparison, we choose another 1,000 healthy couples. We do not use the public INBreast dataset since there are few healthy couples in it. 
% We calculate Fréchet Inception Distance (FID)\cite{heusel2017gans} of each couple respectively to measure the image distribution distances. 
To measure the image distribution distances, we use Fréchet Inception Distance (FID) \cite{heusel2017gans}, which has been used to evaluate medical images \cite{haarburger2019multiparametric,malkiel2019conditional}.
After calculating FID value of healthy set $D^H$ and the unhealthy set $D^U$, we conduct \emph{Hypothesis Testing} with the null hypothesis $H_0$ and althernative hypothesis $H_1$ defined as: 
\begin{equation}\label{eq:ttest}
H_0:\mu(D^H)>=\mu(D^U) \  \ H_1:\mu(D^H)<\mu(D^U).
\end{equation}
We obtain a p-value of $0.014<0.05$, which provides an evidence for us to reject $H_0$, i.e., the bilateral distribution distance of unhealthy cases is larger than healthy cases significantly. This result can be regarded as a manifestation of our symmetric prior assumption.

\subsection{Experimental Analysis}
\textbf{Compared Baselines for Malignancy Classification.} We conduct our experiments on both Mass malignancy classification(the 2nd and the 3rd columns of Table~\ref{tab:comparison_all}) and Mixed-lesion Malignancy classification(the last two columns of Table~\ref{tab:comparison_all}). 

The first four lines in Table~\ref{tab:comparison_all} summarize the official results of the representative methods.
To be fair, we compare the results with the backbone of AlexNet~\cite{krizhevsky2012imagenet} and ResNet50~\cite{hermans2017defense} separately. Due to the slightly difference in the number of images used by reference absence, for a fair comparison, we re-implement some baselines in the list  such as vanilla methods which means using AlexNet~\cite{krizhevsky2012imagenet} / ResNet50~\cite{hermans2017defense}, classification methods~\cite{zhu2017deep, wu2019deep}, natural image classification methods~\cite{zhou2016learning,fukui2019attention} and counterfactual generation methods~\cite{zhu2017unpaired,schlegl2017unsupervised,siddiquee2019learning}.

\textbf{Result Analysis.} 
As shown in Table~\ref{tab:comparison_all}, we achieve state-of-the art performance. We outperformed attention-based methods (Zhu~\cite{zhu2017deep}, ABN~\cite{fukui2019attention} and CAM~\cite{zhou2016learning}) largely by $4.9\%$ to $10.5\%$, multi-view method (Wu~\cite{wu2019deep}) largely by $4.7\%$ to $7.5\%$ and GAN-based methods( AnoGAN~\cite{schlegl2017unsupervised}, Fixed-Point GAN~\cite{siddiquee2019learning} and CycleGAN~\cite{zhu2017unpaired}) largely by $4.0\%$ to $11.5\%$.
Specifically, Zhu~\cite{zhu2017deep}, ABN~\cite{fukui2019attention} and CAM~\cite{zhou2016learning} take advantage of the attention mechanism. They all outperform the vanilla baseline. However, without exploiting the domain knowledge of mammograms, their performances are limited. %Our proposed CGN can outperform them largely by $4.9\%$ to $10.5\%$. 
Wu~\cite{wu2019deep} uses multi-view simple fusion. Better results compared with vanilla baseline indicate the bilateral information is useful. However, they are inferior to us since mammograms can not be pixel-to-pixel aligned. %Our proposed CGN outperformed them largely by 5.1\%. 
As to AnoGAN~\cite{schlegl2017unsupervised}, compared with the vanilla baseline, AnoGAN performs slightly worse in INBreast dataset than in the in-house dataset. We argue this is because there are relatively more sufficient healthy images in the in-house dataset, leading to better healthy modeling. However, they are still much lower than us due to suffering from various healthy patterns in mammogram.  %\red{AnoGAN also performs worse than ours on in-house dataset by $7.8\%$ due to suffering from various healthy patterns in mammogram.} 
Fixed-Point GAN~\cite{siddiquee2019learning} and CycleGAN~\cite{zhu2017unpaired} achieve similar performances due to similar cycle consistency constraints. They outperform AnoGAN since they can make use of the image-level annotations. However, their performances are limited by suffering from the ill-posed translation on lesion removal. 

\begin{figure*}
\begin{center}
    \includegraphics[height=0.7\linewidth]{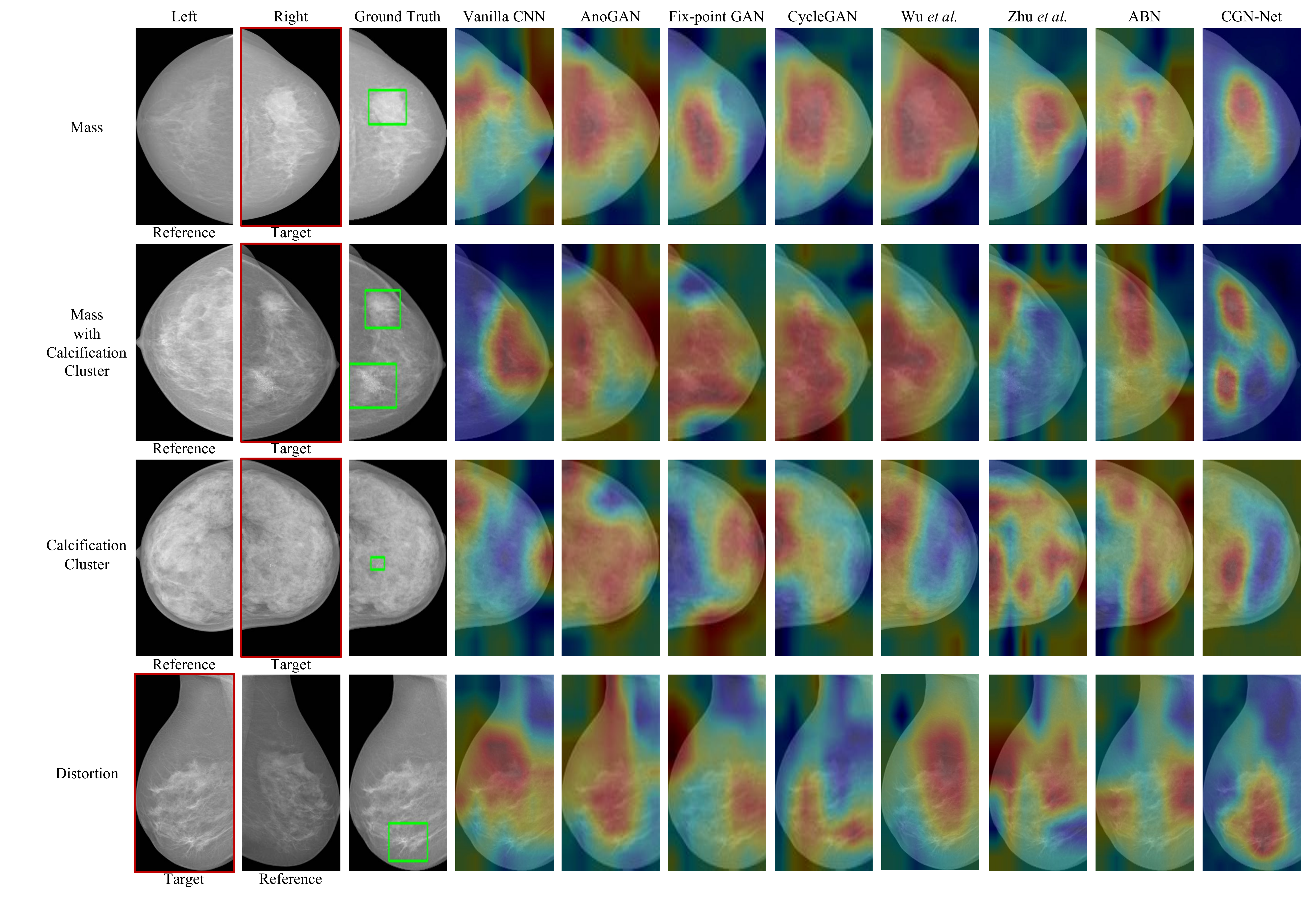}
\end{center}
    % \fbox{\rule{0pt}{2in} \rule{.9\linewidth}{0pt}}
    \centering
    \caption{Visualization of class activation maps of Vanilla CNN, AnoGAN~\cite{schlegl2017unsupervised}, Fixed-Point GAN~\cite{siddiquee2019learning}, CycleGAN~\cite{zhu2017unpaired}, Wu \textit{et al.}~\cite{wu2019deep}, Zhu \textit{et al.}~\cite{zhu2017deep}, ABN\cite{fukui2019attention} and CGN. Each row represents a pair of mammograms from bilateral breasts in the INBreast. The target containing lesions is bounded by a red rectangle. The ground truth bounding boxes are labeled by green rectangles. 
}
\label{fig:vis}
\end{figure*}

\textbf{Localization Evaluation} To verify whether the proposed model focuses on the lesion areas or not, we evaluate the localization error by CAM~\cite{zhou2016learning}. Same as~\cite{zhou2016learning}, we first calculate the CAMs based on the predicted category. Then to generate a bounding box from CAM, we segment the regions whose CAM value is larger than 20\% of the max CAM value and obtain the bounding box for the largest connected component in the segmentation map. We use the top-1 localization error as ILSVRC except for the intersection over union (IOU) threshold of 0.1, since our main concern is the classification performance, the precise localization is not necessary. As is shown in Table~\ref{tab:localization}, the proposed method obtains a localization error of 0.421 for masses and 0.455 for all lesions, outperforming other methods.

\textbf{Visualization} To verify the effectiveness of CGN in terms of learning lesion area, we visualize the class activation maps, as shown in Fig.~\ref{fig:vis}. 
We can see the asymmetry of lesions on bilateral images validates the bilateral asymmetric prior (the first three columns).The proposed CGN succeeds to focus on all lesions since it incorporates the bilateral symmetry prior. 
%We can see from the first column the asymmetry in terms of lesions on bilateral images, which validates the bilateral asymmetric prior. Besides, the proposed CGN succeeds to focus on all lesions, which can be contributed to the incorporation of symmetric prior. 
In contrast, the other methods show uneven results. In the first two cases, the other methods also show reasonable attention since the mass areas are highly different from the background. However, for the last two cases, the lesions are relatively indistinct. Thus it is quite challenging to find the lesions without bilateral information.%Although on some cases, some other methods can also learn reasonable attention, the learned lesions are relative small and indistinct.

\subsection{Counterfactual Validation}
Since there are no ground truth images under counterfactual conditions, we validate the effectiveness and reasonableness of our generated counterfactual features in two aspects, the FID measurement and the further feature visualization, which are motivated by counterfactual evidence in~\cite{Besserve2020Counterfactuals}.

\begin{table*}[ht]
\centering
% \resizebox{\textwidth}{0.5}{
\begin{tabular}{c|c|c|c|c|c|c}
\hline
\textbf{Bilateral} & \textbf{$\mathbf{L}_{\mathbf{NE}}$} & \textbf{Triplet Loss}  & \textbf{AUC}(a) & \textbf{AUC}(b) & \textbf{AUC}(c) & \textbf{AUC}(d)\\
\hline
$\times$ & $\times$ & $\times$  & 0.820 & 0.827 & 0.780 & 0.697\\
SBF & $\times$ & $\times$ & 0.862 & 0.858 & 0.807 & 0.721\\
TF-GAN & $\times$ & $\times$  & 0.883 & 0.873 & 0.857 & 0.731 \\
BF-GAN & $\times$ & $\times$  & 0.860 & 0.842 & 0.849 & 0.720\\
%TF-GAN & $\times$ & $cycle$ & 0.855 & 0.835 & 0.815 & 0.742\\
%TF-GAN & $\times$ & $\times$ & 0.883 & 0.873 & 0.857 & 0.731 \\
AdaIN-GAN & $\times$ & $\times$  & 0.886 & 0.873 & 0.858 & 0.734\\
AdaIN-GAN & $\checkmark$ & $\times$ & 0.891 & 0.898 & 0.874 & 0.777\\
AdaIN-GAN & $\checkmark$ & Non-feedback &  0.873 & 0.863 & 0.858 & 0.741\\
$\times$ & $\checkmark$ & Feedback  & 0.837 & 0.851 & 0.836 & 0.716\\
AdaIN-GAN & $\times$ & Feedback  & 0.905 & 0.902 & 0.884 & 0.771\\
% AdaIN-GAN & $\checkmark$ & Feedback  & 0.884 & 0.886 & 0.878 & 0.767\\
% AdaIN-GAN & $\checkmark$ & Var-Feedback  & 0.860 & 0.863 & 0.850 & 0.739\\
\hline
AdaIN-GAN & $\checkmark$ & Feedback  & \textbf{0.910} & \textbf{0.911} & \textbf{0.885} & \textbf{0.781}\\
\hline
\end{tabular}
\caption{AUC evaluation of ablation study on (a) INBreast dataset for mass classification with Alexnet; (b) INBreast dataset for mass classification with Resnet50; (c) INBreast dataset for mixed-lesion classification with Resnet50; (d) in-house dataset for mixed-lesion classification with Alexnet.} 
% {INBreast dataset for mass malignancy classification with AlexNet; (b) INBreast dataset for mass malignancy classification with ResNet50; (c) INBreast dataset for mixed-lesion malignancy classification with ResNet50; (d) in-house dataset for mixed-lesion malignancy classification with AlexNet.}
\label{ablation_all}
\end{table*}

\textbf{Counterfactual Visualization}
We visualize the target features, reference features, and generated counterfactual features in Fig.~\ref{fig:feature} to further verify the effectiveness of our counterfactual generation qualitatively. Since the three kinds of features are all with high dimension, we perform the max-pooling cross the channel dimension to generate the visualization heatmap for each of them. The heatmaps are shown in the last three columns respectively.
We can see that the activated lesion features in the target features marked by green rectangles disappear in the counterfactual features. While the counterfactual features in lesion-free areas are similar to the target features. This means that the proposed method can generate a healthy version of the target features, i.e., counterfactual features, effectively.

We also visualize the predicted location of lesions during the iterative training process to further verify the effectiveness of CGN in Fig.~\ref{fig:iterative}. With the process of iteration, the predicted location of lesions becomes more and more accurate.

% \begin{figure*}
% \begin{center}
%     \includegraphics[height=1.2\linewidth]{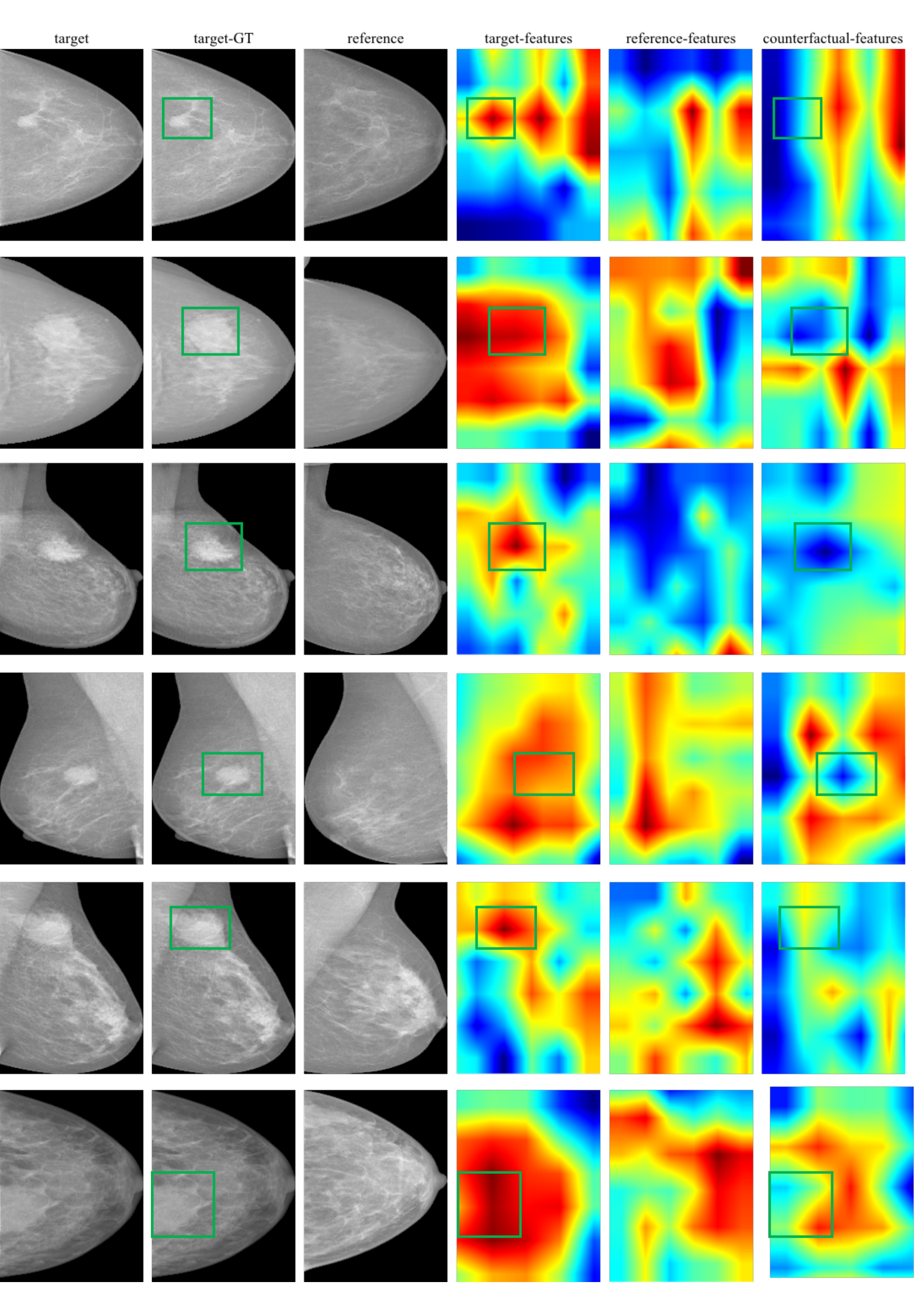}
% \end{center}
%     \centering
%     \caption{\textbf{Visualization.} Left three columns: the target images, the target images with ground truth annotations which are marked by green rectangles on lesion areas, and reference images which are flipped horizontally for convenient comparison; Right three columns: feature maps of target images, feature maps of reference images, and feature maps of our generated counterfactual features. All visualized features are obtained by taking the maximum value of 256 channels. The green rectangles in each row mark the features in lesion areas before and after the counterfactual generation.}
    
%     \label{fig:feature}
% \end{figure*}

% \begin{figure*}
% \begin{center}
%     \includegraphics[height=1.2\linewidth]{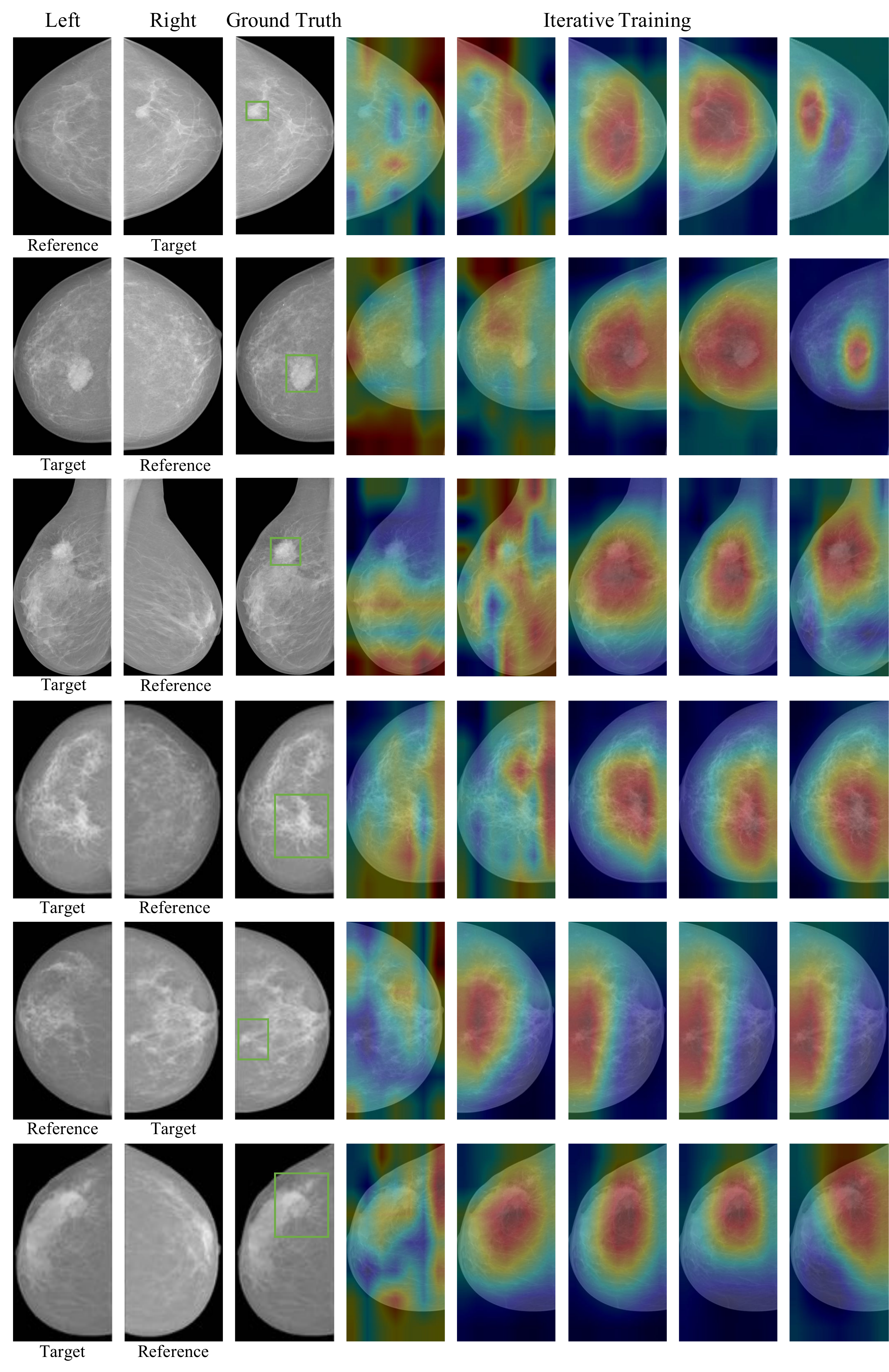}
% \end{center}
%     \centering
%     \caption{\textbf{Iterative Training Process.} Left three columns: the images of the left side, the images of the right side, with being target or reference marked below, and the target images with ground truth annotations which are marked by green rectangles on lesion areas; Right five columns: the predicted location of lesions by CGN during training per ten epochs.}
    
%     \label{fig:iterative}
% \end{figure*}

\textbf{FID measurement}
To further evaluate the effectiveness of the generated counterfactual features, we calculate the mean FID \cite{haarburger2019multiparametric} to measure the feature distribution distances in the INBreast. The mean FID between the target and reference features is 56.15. The counterfactual-reference mean FID is 27.04. The target-counterfactual mean FID is 25.42 while the one after removing the lesion areas from ground truth is 0.60. By comparing the four distances to each other, we find the learned counterfactual features contain both reference information and target information in healthy areas.

\subsection{Ablation Study}
\label{sec:ablation study}
We evaluate some variant models to verify the effectiveness of each component. The ablative results in Table.~\ref{ablation_all} show that deleting or changing any of the components would lead to a descent of the classification performance. 
%\xinwei{This sentence should be replaced by detailed experimental result}. 
Specifically, naive bilateral features fusion also leads to a boosting of $2.4\%$ to $4.2\%$ over vanilla on performance. It proves the bilateral symmetric prior is quite helpful for malignancy classification. Meanwhile, the proposed prediction feedback mechanism outperforms the non-feedback largely by $4.8\%$. We explain that the classification module provides additional useful supervision for lesion localization, making learning more accurate and stable. For additional counterfactual constraint of negative embedding loss, we show that it improves the performance by $0.9\%$. Here are some interpretation for the variants:

\textbf{$\times$ in the first raw}: Vanilla single view netwwork.

\textbf{SBF}: Simple Bilateral features. The bilateral features are directly concatenated and fed into the fusion layer;

\textbf{TF-GAN}: Target-feature GAN. Replace AdaIN input by target features only;

\textbf{BF-GAN}: Bilateral-feature GAN. Replace AdaIN input by simple combination of bilateral features;

\textbf{Non-feedback}: Estimate lesion areas $\Omega$ by the areas with the largest target-counterfactual distance.

% \section{Additional Ablation Experiments}

% \subsection{Reference Images/Healthy Images}
% \subsection{Triplet Loss Constraint/ Direct Constraint}
To further verify the effectiveness of the proposed adversarial loss and feedback triplet loss $\mathcal{L}_{FT}$, we applied two variants respectively:

\begin{table*}
\centering
% \resizebox{\textwidth}{0.5}{
\begin{tabular}{c|c|c|c|c}
\hline
 Methodology & AUC(a) & AUC(b) & AUC(c) & AUC(d)  \\
\hline
Variant (1) & 0.884 & 0.886 & 0.878 & 0.767 \\
Variant (2) & 0.860 & 0.863 & 0.850 & 0.739 \\
\hline
Proposed Method & \textbf{0.910} &  \textbf{0.911} & \textbf{0.885} & \textbf{0.781}\\
\hline
\end{tabular}
\caption{AUC evaluation on (a) INBreast dataset for mass classification with Alexnet; (b) INBreast dataset for mass classification with Resnet50; (c) INBreast dataset for mixed-lesion classification with Resnet50; (d) in-house dataset for mixed-lesion classification with Alexnet.}
\label{tab:comparison_constraint}
\end{table*}

\begin{table*}
\centering
\small
% \resizebox{\textwidth}{0.5}{
\begin{tabular}{c|c|c|c|c}
\hline
 Methodology & AUC(a) & AUC(b) & AUC(c) & AUC(d)  \\
\hline
SBF & 0.862  & 0.858 & 0.807 & 0.721 \\
GF & 0.865  & 0.862 & 0.812 & 0.726 \\
SFF & 0.864  & 0.862 & 0.813 & 0.724 \\\cline{1-5}
\hline
Proposed Method & \textbf{0.910} &  \textbf{0.911} & \textbf{0.885} & \textbf{0.781}\\
\hline
\end{tabular}
\caption{AUC evaluation of biliteral comparative experiments on (a)(b)(c)(d). We evaluate our method on four different experiment settings to illustrate our performance against other methods. The four settings are: (a) INBreast dataset for mass malignancy classification with Alexnet; (b) INBreast dataset for mass malignancy classification with Resnet50; (c) INBreast dataset for mixed-lesion malignancy classification with Resnet50; (d) in-house dataset for mixed-lesion malignancy classification with Alexnet.
% INBreast dataset for mass malignancy classification with Alexnet; (b) INBreast dataset for mass malignancy classification with Resnet50; (c) INBreast dataset for mixed malignancy classification with Resnet50; (d) in-house dataset for mixed-lesion malignancy classification with Alexnet.
}
\label{tab:comparison_bilateral}
\end{table*}

\textbf{Variant (1)}: As to the discriminator loss, we directly minimize the distance between counterfactual features $H_C^{\Omega}$ and reference features $H_R^{\Omega}$ in lesion areas. We still estimate the lesion areas ${\Omega}$ by the prediction feedback mechanism. 

Compared with the competing losses we used for discriminator and generator in our paper:
\begin{align}
\min_G \max_D \mathcal{L}_{AD}(G,D) := \log\left ( D\left ( H_R \right ) \right ) \notag\\ + \log\left ( 1 - D\left ( G(H_T,H_R) \right ) \right ).
\end{align}
% \begin{equation}
%     \mathcal{L}_{G}=log\left ( 1-D\left (H_C \right ) \right )
% \end{equation}
% \begin{equation}
%     \mathcal{L}_{D}=-log\left ( 1-D\left ( H_C \right ) \right ) - log\left ( D\left (H_R \right ) \right )
% \end{equation}

We denote the modified discriminator loss and generator loss of variant (1) as:
%Based on our motivation, we propose two similarity constraints that the counterfactual generating features should satisfy in \textbf{Sec.3.1} (Line 237-251). The constraint is achieved in our method by losses in Eq.~\ref{eq:our_loss}.

%\begin{equation}\label{eq:our_loss}
%\mathcal{L} = \mathcal{L}_{R} + \mathcal{L}_{NE} + \mathcal{L}_{FT} + \mathcal{L}_{CLS}
%\end{equation}

\begin{equation}\label{eq:lossromega}
\mathcal{L}_{G}^{\Omega}=log\left ( 1-D\left ( H_C^{\Omega} \right ) \right )
\end{equation}
\begin{equation}\label{eq:lossdomega}
\mathcal{L}_{D}^{\Omega}=-log\left ( 1-D\left ( H_C^{\Omega} \right ) \right ) - log\left ( D\left ( H_R^{\Omega} \right ) \right )
\end{equation}

therefore we have the final losses:
\begin{equation}\label{eq:loss1_v1}
    \mathcal{L}_{1} = \mathcal{L}_{G}^{\Omega} + \mathcal{L}_{NE} + \mathcal{L}_{CLS}
\end{equation}
which are iteratively trained with $\mathcal{L}_{D}^{\Omega}$.

\textbf{Variant (2)}: As to the feedback triplet loss $\mathcal{L}_{FT}$, we design a variant feedback loss $\mathcal{L}_{FC}$ instead. We direct constraint the generated features $H_C^{\overline{\Omega}}$ in lesion-free areas to be similar to target features $H_T^{\overline{\Omega}}$. 

The $\mathcal{L}_{FC}$ is defined as:
\begin{equation}\label{eq:lossftc}
    \mathcal{L}_{FC} = d_{tc}
\end{equation}
where $d_{tc}$ is defined as Eq.~\eqref{loss:dtc};

Therefore we have the final losses:
\begin{equation}\label{eq:loss1_v2}
    \mathcal{L}_{2} = \mathcal{L}_{G} + \mathcal{L}_{NE} + \mathcal{L}_{FC} + \mathcal{L}_{CLS}
\end{equation}
which are iteratively trained with $\mathcal{L}_{D}$. The $L_G$ and $L_D$ are the generator loss and the discriminator loss respectively, as we used in the competing loss in $\min\limits_G \max\limits_D \mathcal{L}_{AD}(G,D)$.

\begin{figure*}
\begin{center}
    \includegraphics[height=1.2\linewidth]{}
\end{center}
    \centering
    \caption{\textbf{Visualization.} Left three columns: the target images, the target images with ground truth annotations which are marked by green rectangles on lesion areas, and reference images which are flipped horizontally for convenient comparison; Right three columns: feature maps of target images, feature maps of reference images, and feature maps of our generated counterfactual features. All visualized features are obtained by taking the maximum value of 256 channels. The green rectangles in each row mark the features in lesion areas before and after the counterfactual generation.}
    
    \label{fig:feature}
\end{figure*}

\begin{figure*}
\begin{center}
    \includegraphics[height=1.2\linewidth]{}
\end{center}
    \centering
    \caption{\textbf{Iterative Training Process.} Left three columns: the images of the left side, the images of the right side, with being target or reference marked below, and the target images with ground truth annotations which are marked by green rectangles on lesion areas; Right five columns: the predicted location of lesions by CGN during training per ten epochs.}
    
    \label{fig:iterative}
\end{figure*}

%We design such two variants since they are more intuitive thoughts than the proposed losses.
The experimental results of the two variants against our proposed method are shown in Table.~\ref{tab:comparison_constraint}. We can see that modifying either the adversarial loss $\mathcal{L}_{AD}(G,D)$ or the feedback triplet loss $\mathcal{L}_{FT}$ would lead to a descent performance. 
% \red{\sout{In addition, for the feedback triplet loss, we have also conducted ablative experiment for $d_{rc}$ on both measuring the whole distance and the lesion-free distance with other components unchanged, and the latter yields $90.4\%$ (slightly weaker than $91.0\%$ of our method).}}
We argue that our proposed losses are robust and effective. As we said that due to the pixel-to-pixel registration between bilateral images, we achieve counterfactual generation in feature level instead of image level. In practical experiments, we get $91.0\%$ of feature level which is higher than $90.6\%$ of image level, verifying the performance of the feature generation. Moreover, the training speed of the former is more faster than the latter with 6.6 s/epoch v.s. 23.5 s/epoch.

\subsection{Bilateral Analysis}
For bilateral analysis, we re-implement some interesting modules used in recent papers.

\textbf{SBF}: As mentioned in ablation study, Simple Bilateral Features. \emph{e.g.,} Kim~\textit{et al.}~\cite{kim2016latent} applied in ToMO;

\textbf{GF}: Gated fusion in SBF. Learning more weights for asymmetric enhancement based on SBF~\cite{liu2019unilateral};

\textbf{SFF}: Simple Four-view features fusion. Ensembling cross-view and contralateral-view simply~\cite{wei2011computer};

SFF and GF can be seen as variants of SBF. As shown in Table.~\ref{tab:comparison_bilateral}, SFF and GF slightly outperform SBF for using more information but are inferior to our proposed method for naive use of view-wise information. 
Both of them share the similar disadvantage with SBF: even for healthy breasts, bilateral mammograms are only roughly symmetric but not pixel-to-pixel, the similarity of bilateral features cannot be guaranteed. While our method uses the symmetric prior by counterfactual generation with an improved GAN. Therefore, our method suffers less from these problems and leads to better results.

\section{Conclusion}
In this paper, we propose a novel approach called bilateral asymmetry guided Counterfactual Generating Network (CGN) to improve the mammogram classification performance. The proposed method performs the counterfactual generation by exploiting the symmetric prior effectively. Experimental results indicate that the proposed CGN achieves state-of-the-art results in both public and in-house datasets. Our work can be referred as the showcase of exploiting symmetric prior, which widely holds in many human organs,\emph{e.g.,} brains, eyes, skeletal structures, and kidneys. Therefore, we believe that the generalization ability of our method on corresponding medical imaging problems, the efforts of which will be left in future work.

\appendices

\section{Proof of Theorem 3.1}
% \begin{comment}

\begin{lemma}\label{lemma}
If the the causal graph $\mathcal{G}$ satisfies that the common factor $C$ influences the bilateral variables simultaneously, then,
\begin{equation}
\begin{aligned}
    & f_{Y_T}(C, \cdot) = f_{Y_R}(C, \cdot)\\
    & f_{H_T}(C, \cdot) = f_{H_R}(C, \cdot)\\
    & f_{X_T}(C, \cdot) = f_{X_R}(C, \cdot)\\
\end{aligned}
\end{equation}
\end{lemma}
Lemma~\ref{lemma} shows that the causal factor $C$ influences the bilateral mammograms in equal function relationship. 
% \end{comment}

\begin{proof}[Proof of Theorem~\ref{thm1}]

\textbf{Proof} of Eq.~\eqref{eq1}: 

\begin{equation}
\begin{aligned}
    & P(H_{{T}_{(Z_T^{\Omega}=0)}}^{\Omega}=h|H_T^{\Omega}=h_t, Z_T^{\Omega}=1) \\
        & = \int_{c}P(H_{{T}_{(Z_T^{\Omega}=0)}}^{\Omega}=h|C=c)P(C=c| H_T^{\Omega}=h_t, Z_T^{\Omega}=1)dc \\ 
        & = \int_{c}P(H_T^{\Omega} = h|C=c, Z_T^{\Omega}=0)P(c|H_T^{\Omega}=h, Z_T^{\Omega}=1)dc\\
        & = \int_{c}P(H_R^{\Omega} = h|C=c, Z_R^{\Omega}=0)P(c|H_T^{\Omega}=h, Z_T^{\Omega}=1)dc\\
        & = P(H_{{R}_{(Z_R^{\Omega}=0)}}^{\Omega}=h|H_T^{\Omega}=h_t, Z_T^{\Omega}=1),\\
        & = P(H_{R}^{\Omega}=h_r|H_T^{\Omega}=h_t, Z_T^{\Omega}=1)
\end{aligned}
\end{equation}
where the first equation is due to that the $c$ is the only parent node of $H_{{T}_{(Z_T^{\Omega}=0)}}^{\Omega}$; the second equation is according to Markov condition that $H_{{T}_{(Z_T^{\Omega}=0)}}^{\Omega}|C = H_T|C,Z_T^{\Omega}$, the third equation is due to the symmetric prior.

%where we used the symmetric prior ($Z_T^{\Omega} = 1$ \Rightarrow $Z_R^{\Omega} = 0$) and the equal causal relationships between bilateral variables in Lemma~\ref{lemma} in the fifth equation derivation; In the final equation, the notation $(Z_R^{\Omega}=0)$ can be removed because it is the factual event.

\textbf{Proof} of Eq.~\eqref{eq2}: 
Since in the lesion-free areas, there are $Z_T^{\overline{\Omega}}=0$, the probabilities are derived by the actual hidden features $H_T=h_t$ directly, i.e.,
\begin{equation}
\begin{aligned}
& P(H_{{T}_{(Z_T^{\overline{\Omega}}=0)}}^{\overline{\Omega}}=h|H_T^{\overline{\Omega}}=h_t, Z_T^{\overline{\Omega}}=0) \\
& = P(H_{T}^{\overline{\Omega}}=h_t|H_T^{\overline{\Omega}}=h_t, Z_T^{\overline{\Omega}}=0)
\end{aligned}
\end{equation}

% \begin{equation}
% \begin{aligned}
%     & P(H_{T}_{(Z_T^{\overline{\Omega}}=0)}^{\overline{\Omega}}=h|H_T^{\overline{\Omega}}=h_t, Z_T^{\overline{\Omega}}=0) \\
%         & = \frac{P(H_{T}_{(Z_T^{\overline{\Omega}}=0)}^{\overline{\Omega}}=h, H_T^{\overline{\Omega}}=h_t, Z_T^{\overline{\Omega}}=0)}{P(H_T^{\overline{\Omega}}=h_t, Z_T^{\overline{\Omega}}=0)}\\
%         & = \frac{\int_{c}P(H_{T}_{(Z_T^{\overline{\Omega}}=0)}^{\overline{\Omega}}(c)=h|C=c)P(C=c, H_T^{\overline{\Omega}}=h_t, Z_T^{\overline{\Omega}}=0)dc}{P(H_T^{\overline{\Omega}}=h_t, Z_T^{\overline{\Omega}}=0)}\\
%         & = \int_{c}P(H_{T}_{(Z_T^{\overline{\Omega}}=0)}^{\overline{\Omega}}(c)=h)P(c|H_T^{\overline{\Omega}}=h_t, Z_T^{\overline{\Omega}}=0)dc\\
%         & = \int_{c}P(h|f_{H_T}(c, f_{Y_T}(C=c, Z_T^{\overline{\Omega}}=0))=h)P(c|f_{H_T}(c, f_{Y_T}(C =c, Z_T^{\overline{\Omega}}=0))=h_t)dc\\
%         & = \int_{c}P(H_{T}_{(Z_T^{\overline{\Omega}}=0)}^{\overline{\Omega}}(c)=h_t)P(c|H_T^{\overline{\Omega}}=h_t, Z_T^{\overline{\Omega}}=0)dc\\
%         & = \frac{P(H_{T}_{(Z_T^{\overline{\Omega}}=0)}^{\overline{\Omega}}=h_t, H_T^{\overline{\Omega}}=h_t, Z_T^{\overline{\Omega}}=0)}{P(H_T^{\overline{\Omega}}=h_t, Z_T^{\overline{\Omega}}=0)}\\
%         & = P(H_{T}_{(Z_T^{\overline{\Omega}}=0)}^{\overline{\Omega}}=h_t|H_T^{\overline{\Omega}}=h_t, Z_T^{\overline{\Omega}}=0)\\
%         & = P(H_{T}^{\overline{\Omega}}=h_t|H_T^{\overline{\Omega}}=h_t, Z_T^{\overline{\Omega}}=0)
% \end{aligned}
% \end{equation}

\end{proof}

\clearpage
{\small
\bibliographystyle{plain}
\bibliography{egbib}
}

\end{document}